\long\def\comment#1{}
\newcommand{\D}{\:\: |\:\:}
\newtheorem{definition}{Definition}
\newtheorem{theorem}{Theorem}
\newtheorem{proposition}[theorem]{Proposition}
\newtheorem{examp}{Example}
\newenvironment{proof}{{\bf Proof:}}{$\Box$} 
\newenvironment{example}{\begin{examp}\rm}{\end{examp}}
\begin{document}

\title{Majority Rule for Belief Evolution in Social Networks}

\author{}
\date{}
\maketitle

\begin{abstract}
In this paper, we study how an agent's belief is affected by her
neighbors in a social network. We first introduce a general
framework, where every agent has an initial belief on a statement,
and updates her belief according to her and her neighbors' current
beliefs under some belief evolution functions, which, arguably,
should satisfy some basic properties. Then, we focus on the majority
rule belief evolution function, that is, an agent will (dis)believe
the statement iff more than half of her neighbors (dis)believe it.
We consider some fundamental issues about majority rule belief
evolution, for instance, whether the belief evolution process will
eventually converge. The answer is no in general. However, for
random asynchronous belief evolution, this is indeed the case.
\end{abstract}

\section{Introduction}

How agents change their beliefs is a fundamental problem in
Artificial Intelligence. Traditionally in the area of Knowledge
Representation and Reasoning, this problem is normally formalized in
some logical formalism, e.g., propositional logic, and minimal
change serves as the first principle
\cite{Alchourron-etal85,KatsunoMendelzon89}.

In this paper, we consider this problem from a social aspect, that
is, how an agent's belief is affected by her neighbors in a social
network. For instance, an agent might form a belief that the share
price of IBM will increase tomorrow when she discussed this issue
with her colleagues. However, she might also change this belief,
i.e., to believe that the price will actually decrease, after
convinced by her family members later on.

In fact, this aspect, also known as opinion formation and social
learning, has been widely studied in other relevant fields such as
mathematics and statistics
\cite{DeGroot74,Holley75,Liggett85,Krause00}, economics
\cite{Ellison93,DeMarzo03,sandholm2010,AcemogluOP10,Golub10},
sociology \cite{Friedkin90,HegselmannK02}, biology \cite{Clifford73}
and so on. Recently, it has also attracted some attentions in
theoretical computer science \cite{BindelKO11}. However, as far as
we know, it has been long neglected in the AI community.

Consider a multiagent context, where a social community is formed by
some agents. This is usually modeled by a graph as a social network,
where each node represents an agent and each edge represents a
social tie between two agents. Now consider the agents' beliefs
about a particular statement, e.g., whether the share price of IBM
will increase tomorrow. An agent may form a belief about this
statement based on some observations and evidences, e.g., the
financial situation of IBM recently and the trading volume of the
stock today. However, her belief is also heavily affected by other
agents, in particular, her friends' opinions on this statement.

In this paper, we first introduce a general framework to model this
phenomenon. We consider the agents' beliefs about a particular
statement in a social network. Initially, each agent has a prior
belief, which might be formed based on her own experiences,
observations and evidences. Then, the agents start to communicate
each other synchronously or asynchronously, and update their beliefs
according to their neighbors' current beliefs in the social network
under some belief evolution function. We argue that these belief
evolution function, although can be defined in many different ways,
should satisfy a list of basic properties.

Next, we focus on the majority rule as the belief evolution
function. That is, an agent will (dis)believe the statement iff more
than half of her neighbors (dis)believe it. Majority rule is one of
the most natural and well studied function in related fields such as
voting \cite{Gaertner09}. It does make sense under our context as
well. For instance, in the IBM share price scenario, if a majority
number of traders believe that the stock price will
increase/decrease, this will likely be the case.

We investigate some fundamental properties about this function.
Among them, a key issue is the convergence of the belief evolution
process, i.e., will all evolution sequences be stable eventually. It
can be observed that this is not the case in general. However, we
show that, for random asynchronous belief evolution, this is indeed
the case.

\section{Belief Evolution in Social Networks}

We consider a multiagent context. A society is formed by some agents
that are connected. Formally, a {\em social network} $N$ is a
directed graph $\langle A, T\rangle$, where $A$ is a set of {\em
actors} (also called {\em individuals} or {\em agents}) in a
society, and $T \subseteq A \times A$ is a set of dyadic {\em ties}
(also called {\em relationships} or {\em connections}) among agents.
We assume that $(a,a) \in T$ for all $a \in A$ as an agent must know
herself. We say that $b$ is connected to $a$ if $(b,a) \in T$, and
connected by $a$ if $(a,b)\in T$. For convenience, we simply use $a
\in N$ to denote $a \in A$.

Now we consider the agents' beliefs about a statement $s$, for
instance, whether the share price of IBM will increase tomorrow. The
agents' beliefs might be formed according to many different reasons.
We group them into two categories: evidence based influences and
communication based influences. The former includes some facts such
as the financial situation of IBM in the last three months while the
latter includes some facts such as the opinions of the agents'
friends on this statement.

In this paper, we separate the influences of these two categories
into two steps. First, all the agents form an initial belief on the
statement based on the evidence based influences. Then, the agents
start to communicate each other to update their beliefs. We are
mainly focused on the latter step, called {\em belief evolution}.

Once formed an initial belief, at a certain time point, all the
agents' beliefs on the statement can be viewed as a belief profile.
\begin{definition}[Belief profile]\label{profile}
Let $N=\langle A, T\rangle$ be a network, and $s$ a statement. A
{\em belief profile} $P$ of $N$ on $s$ is a mapping from $A$ to
$\{0,1\}$, i.e., $P: A \mapsto \{0,1\}$.
\end{definition}
Here, $P(a), a \in A$ is the opinion of agent $a$ on the statement
$s$. Particularly, $P(a)=1$ means that the agent $a$ believes $s$,
while $P(a)=0$ means that $a$ disbelieves $s$, i.e., $a$ believes
$\lnot s$.

Given a statement $s$ and a network $N$, we say that a belief
profile $P$ of $N$ on $s$ is a {\em consensus} iff all agents have
the same belief, i.e., either for all $a \in N$, $P(a)=1$ or for all
$a$, $P(a)=0$. We say that two profiles are {\em isomorphic} if
there is a one-to-one correspondence between them. We say that a
profile $P$ is {\em less or equal than} another profile $P'$,
denoted by $P \le P'$, if for all $a \in N$, $P(a) \le P'(a)$. We
use $\overline{P}$ to denote a new profile obtained from $P$ by
flipping over all beliefs, i.e., for all $a \in N$,
$\overline{P}(a)=1-P(a)$.

In this paper, we only consider a single statement $s$. This is
because the influences of other statements on $s$ are categorized as
evidence based, and their influences are taken into account in the
agents' initial beliefs. Hence, we omit $s$ in the belief profile
$P$ if it is clear from the context. Also, we assume that the
network structure is fixed throughout the paper. Hence, we sometimes
omit $N$ in the belief profile $P$ as well.

We can visualize a belief profile $P$ as a labeled graph based on
the network $N$ with a label on each node, either $1$ or $0$,
indicating this agent's opinion on the statement.

\comment{\footnote{The social network is generally a directed graph.
However, for simplicity and clarity, we use the undirected version
in examples.}}

\begin{example} \label{profile-example}
Figure \ref{fig1} depicts four belief profiles of the same network
$N_0$. Here, $P_1$ is a consensus while the rest are not; $P_2$ and
$P_3$ are isomorphic; also $P_2=\overline{P_3}$; $P_4 \le P_1$ but
it is not comparable with $P_2$.
\begin{figure}
\begin{center}
\includegraphics[width=8cm]{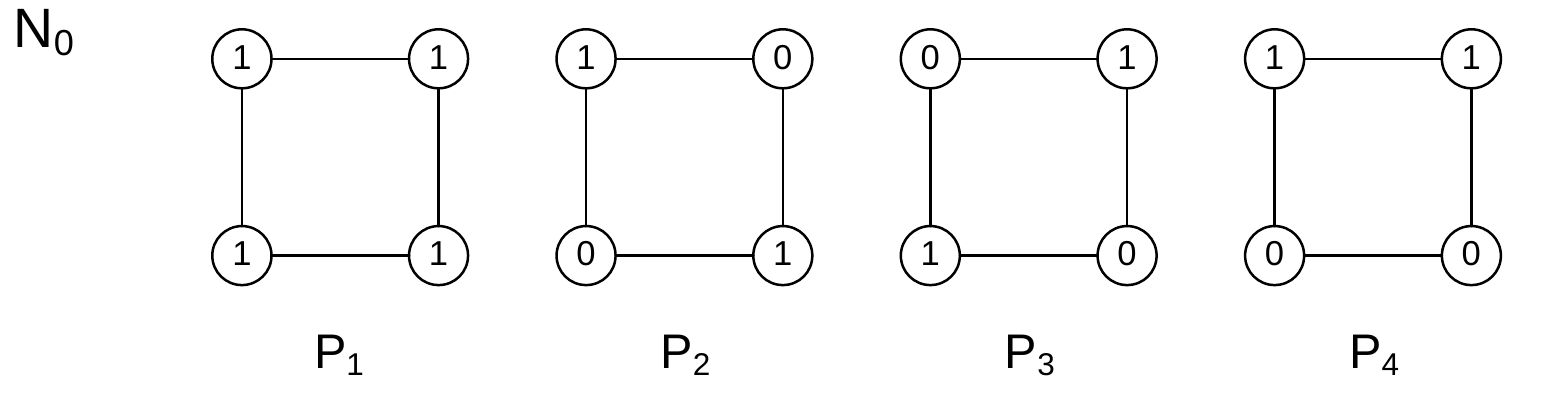}
\caption{Belief profiles of the network $N_0$} \label{fig1}
\end{center}
\end{figure}
\end{example}

The belief profile defines all agents' opinions on the statement at
a certain time point. At this time point, the agents will
communicate with other agents and reconsider their beliefs. The
reconsideration is based on their own strategies.

\begin{definition}[Belief evolution function]\label{evolution-function}
Let $N=\langle A, T\rangle$ be a social network, $a \in A$ an agent
and $s$ a statement. A {\em belief evolution function} $f$ of $a$ on
$s$, denoted by $f_a$, is a mapping $f_a: \mathfrak{P} \mapsto
\{0,1\}$, where $\mathfrak{P}$ is the set of all possible belief
profiles of $N$ on $s$.
\end{definition}
Intuitively, after taking into account the overall belief profile
$P$ on $s$ in the network $N$, the agent $a$ decides to revise her
belief to a new one $f_a(P)$. For convenience, we use $f_N$ to
denote the collection of all belief evolution functions of agents in
$N$ and use $f_N(P)$ to denote the belief profile obtained from $P$
by applying $f_N$ on all agents. In this sense, $f_N$ updates a
belief profile to a new one. Again, $s$ might be omitted in $f_a$
and $f_N$.

The belief evolution function in Definition \ref{evolution-function}
can be defined arbitrarily. For example, a special one is so-called
{\em stubborn}, that is, the agent never changes her belief. On the
contrary, another function is that the agent always flips over her
belief at every step. There are other possibilities, for instance,
an agent will change her belief as soon as there are at least 3 of
her friends taking an opposite opinion.

However, not every belief evolution function is rational, for
instance, the one that always flipping over her beliefs. Of course,
we are only interested in those rational ones. For this purpose, we
propose a list of desirable properties. Let $N=\langle A, T\rangle$
be a network and $a \in A$. We say that a belief evolution function
$f_a$ is
\begin{description}
\item[bounded] if $min\{P(b) \D b \in N \} \le f_a(P) \le max\{P(b) \D b \in N
\}$ for all profiles $P$.
\item[neutral] if $f_a(\overline{P})=1-f_a(P)$ for all profiles $P$.
\item[congruent] if $f_a(P)=f_{a'}(P')$ for any two isomorphic profiles
$P$ and $P'$, where $a$ and $a'$ are corresponded.
\item[local] if for any two profiles $P$ and $P'$ that agree $s$ for
all agents connected by $a$ (including $a$ herself), i.e., for all
$b \in N$ such that $(a,b) \in T$, $P(b)=P'(b)$, we have
$f_a(P)=f_a(P')$.
\item[monotonic] if $P \le P'$ implies that $f_a(P) \le f_a(P')$ for all profiles $P$ and $P'$.
\item[non-slavish] if there does not exist an agent $b$ such that for
all profiles $P$, $f_a(P)=P(b)$.
\end{description}
We say that the collection $f_N$ of all believe evolution functions
is bounded (neutral, congruent, local, monotonic and non-slavish) if
for all $a \in N$, $f_a$ is bounded (neutral, congruent, local,
monotonic and non-slavish).

We argue that a rational belief evolution function should satisfy
these properties. Boundedness means that the agent will not change
her belief if all agents reach a consensus. Neutrality means that
the function will not be affected by how the statement is
represented, for instance, from $s$ to $\lnot s$. Congruency means
that the function will not be affected by how the social network is
represented. Locality means that the agent only has local
information, that is, the agent is not able to get the beliefs of
those agents not known by her. Monotonicity means that if an agent
changes her belief to a new positive (negative resp.) one under a
circumstance, then in another circumstance that is at least as
positive (negative) as this one, she will do the same thing.
Finally, non-slavishness means that the agent cannot be dominated by
a single agent in any circumstance. In particular, non-slavishness
implies non-stubbornness.

Not every belief function satisfies these properties. For instance,
the stubborn function satisfies all but not non-slavishness. The one
that always flips over beliefs is neutral, congruent, local,
non-slavish but does not satisfy the rest. The one that changes the
belief iff 3 of her friends having an opposite opinion is bounded,
congruent, local, neutral, non-slavish (if connected to at least 3
other agents) but not monotonic.

By applying the belief evolution functions, the belief profile of a
network changes from one to another. We call this a {\em belief
evolution step}. First, we consider the case that all agents have to
reconsider their beliefs at every step, called {\em synchronous
belief evolution}. Given an initial belief profile $P^0$ and the
belief evolution function $f_N$ for all agents in the network $N$,
the synchronous belief evolution will perform iteratively.

\begin{definition}[Synchronous belief evolution]
Let $N=\langle A, T\rangle$ be a network, and $f_N$ a collection of
belief evolution functions for all agents in $N$. Let $P^0$ be an
initial belief profile of $N$ on a statement $s$. The {\em
synchronous belief evolution} for $P^0$ under $f_N$ is a sequence
$\{P^0,\dots,P^i,\dots\}$ of belief profiles, where $P^{i+1}$ is
obtained from $P^i$ by applying $f_N$ as follows:
\[
\textrm{for any } a \in N, P^{i+1}(a)=f_a(P^i).
\]
We simply write $P^{i+1}=f_N(P^i)$.
\end{definition}

\comment{ Note that synchronous belief evolution only depends on
$P^0$ and $f_N$.}

Also, we consider asynchronous belief evolution, in which not all
agents are forced to reconsider their beliefs at a certain time
point. The rationale for asynchronous belief evolution is twofold.
First, different agents may communicate with their friends
asynchronously due to, e.g., the frequency of contact and/or the
strength of their friendship. Second, it is possible that an agent
may sometimes stick on her own belief even she knows her friends'
opinions.

Let $B$ be a subset of agents in a network $N$ and $P$ a belief
profile of $N$. The belief profile obtained from $P$ by applying
$f_N$ on agents in $B$, denoted by $f^B_N(P)$, is
\begin{itemize}
\item $f^B_N(P)(a)=f_a(P)$, for $a \in B$.
\item $f^B_N(P)(a)=P(a)$, for $a \not\in B$.
\end{itemize}

\begin{definition}[Asynchronous belief evolution]
Let $N=\langle A, T\rangle$ be a network, and $f_N$ a collection of
belief evolution functions for all agents in $N$. Let $P^0$ be an
initial belief profile of $N$ on a statement $s$, and
$\sigma=B_1,\dots B_n,\dots$ be a sequence of groups of agents. The
{\em asynchronous belief evolution} for $P^0$ under $f_N$ with
respect to $\sigma$ is a sequence $\{P^0,\dots,P^i,\dots\}$ of
belief profiles, where $P^{i+1}=f_N^{B_{i+1}}(P^i)$.
\end{definition}
Here, $B_i$ means those agents who want to reconsider their beliefs
at time point $i$. In asynchronous belief evolution, $P^{i+1}$ is
obtained from $P^i$, $f_N$ together with $B_{i+1}$. Clearly,
synchronous belief evolution can be regarded as a special case of
asynchronous belief evolution by setting the sequence of groups of
agents as $A,\dots A,\dots$.

\comment{Note that both synchronous and asynchronous belief
evolution satisfy the Markov property in the sense that the new
belief profile only depends on the previous one (but not further
historical ones). }

We are mainly interested in the dynamics of the agents' beliefs. In
the framework, the agents will update their beliefs according to
other agents' opinions. A question is, will this evolution process
stop, if yes, at what kind of belief profiles?

\begin{definition}[Equilibrium]\label{equilibrium}
Let $N$ be a network and $f_N$ the belief evolution functions. We
say that a belief profile $P$ is an {\em equilibrium} under $f_N$ if
$P=f_N(P)$.
\end{definition}

\begin{proposition}\label{equilibrium-same}
That $P$ is an equilibrium under $f_N$ iff for any subset $B$ of
agents, $P=f_N^B(P)$.
\end{proposition}
\begin{proof}
That $P$ is an equilibrium under $f_N$ iff for all agents $a$,
$P(a)=f_a(P)$ iff for any subset $B$ of agents, $P=f_N^B(P)$.
\end{proof}

Proposition \ref{equilibrium-same}, although simple, shows that
equilibrium in terms of synchronous belief evolution is the same as
equilibrium in terms of asynchronous belief evolution.

Obviously, if $f_N$ is bounded, then the consensus profile must be
an equilibrium under any evolution function $f_N$. However, if $f_N$
is not bounded, maybe there exists no equilibrium under $f_N$ at
all. A simple counterexample is that an agent always flips over her
belief.

\comment{Also, as we will show later, there are other equilibrium
profiles under a given network $N$ and function $f_N$. This implies
that, in some cases, consensus may never be reached.}

In this paper, our main concern is whether a belief evolution
process will eventually terminate on an equilibrium.

\begin{definition}[Convergence]\label{termination}
Let $N$ be a network, $f_N$ the belief evolution functions, $P^0$ an
initial belief profile of $N$ and $\sigma$ a sequence of groups of
agents. The belief evolution for $P^0$ under $f_N$ with respect to
$\sigma$ {\em converges} if there exists a number $k$ such that
$P^k$ is an equilibrium of $f_N$. In this case, $P^k$ is called the
{\em convergence} of this belief evolution process. For synchronous
evolution, we simply say that the evolution process for $P^0$ under
$f_N$ converges.
\end{definition}

\comment{Strictly speaking, termination is perhaps a more accurate
name than convergence for Definition \ref{termination} as
convergence also includes the cases to reach an equilibrium
infinitely. However, under our setting, if the social network is
finite, then the number of possible belief profiles (together with
potential subsets of agents) is finite. Hence, we mix the usage of
these two terminologies.}

It can be observed that for some $f_N$, the evolution process will
never converge, e.g., the one that an agent always flips over her
beliefs. Of course, this is an extreme case as the belief evolution
function is not rational.

\section{Majority Rule for Belief Evolution}

\comment{ In the previous section, we have introduced a framework
for belief evolution in social networks. This is a general framework
as the belief evolution functions of agents can be defined
arbitrarily (arguably satisfying some desirable properties).
However, it is not possible to perform the actual belief evolution
process if the functions remain undefined.}

This section dedicates to a natural yet representative belief
evolution function of the framework presented in the previous
section, namely the majority rule function, originated from the
majority rule for voting system \cite{Gaertner09}. For majority rule
evolution function, an agent will change her belief iff a majority
number of her friends (including herself) have an opposite opinion.

\begin{definition}[Majority rule]\label{majority}
Let $N=\langle A, T\rangle$ be a social network and $a \in A$ an
agent. The {\em majority rule} belief evolution function $m_a$ is
defined as
\begin{displaymath}
m_a(P)=\left \{ \begin{array}{ll} 1 & \textrm{if }
N^+(a,P)>N^-(a,P) \\
0 & \textrm{if } N^+(a,P)<N^-(a,P) \\
P(a) & \textrm{if } N^+(a,P)=N^-(a,P),
\end{array}\right.
\end{displaymath}
where $P$ is a belief profile of $N$, $N^+(a,P)=|\{b|(a,b) \in T,
P(b)=1\}|$ and $N^-(a,P)=|\{b|(a,b) \in T, P(b)=0\}|$ respectively.
We use $m_N$ to denote the collection of all majority rule functions
of agents in $N$.
\end{definition}
Here, $N^+(a,P)$ ($N^-(a,P)$) is the number of agents connected by
$a$ and (dis)believing in the statement.

\comment{Note that the majority rule belief evolution function is
just one possible model of the framework presented in the previous
section. In fact, it is homogeneous in the sense that every agent
uses the same function, while, in general, belief evolution
functions for different agent can be defined heterogeneously. }

The majority rule function satisfies all desirable properties
mentioned in the previous section.
\begin{theorem}
The majority rule belief evolution function is bounded, neutral,
congruent, local, monotonic, and is non-slavish if every agent is
connected to at least two other agents.
\end{theorem}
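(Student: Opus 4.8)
The plan is to verify each of the six properties in turn, since the theorem is a conjunction of independent claims about the single function $m_a$. Each verification amounts to unwinding Definition~\ref{majority} and checking the corresponding condition directly, so the proof is a sequence of short case analyses rather than a single argument.

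First I would establish \textbf{boundedness}. If $P$ is the all-ones consensus then every neighbor of $a$ satisfies $P(b)=1$, so $N^+(a,P)>N^-(a,P)=0$ (using that $(a,a)\in T$ guarantees at least one neighbor), giving $m_a(P)=1$; symmetrically the all-zeros profile yields $0$. More generally $m_a(P)\in\{0,1,P(a)\}$ always lies between $\min_b P(b)$ and $\max_b P(b)$, because $m_a(P)=1$ can only happen when some neighbor has belief $1$, and $m_a(P)=0$ only when some neighbor has belief $0$. For \textbf{neutrality}, observe that flipping every belief swaps the roles of $N^+$ and $N^-$, i.e. $N^+(a,\overline P)=N^-(a,P)$ and $N^-(a,\overline P)=N^+(a,P)$; feeding this into the three cases of the definition shows $m_a(\overline P)=1-m_a(P)$, where in the tie case one uses $\overline P(a)=1-P(a)$. \textbf{Congruency} follows because $m_a(P)$ depends only on the multiset of beliefs among $a$'s neighbors, which an isomorphism of profiles preserves; and \textbf{locality} is essentially immediate, since $N^+(a,P)$ and $N^-(a,P)$ reference only those $b$ with $(a,b)\in T$, so two profiles agreeing on $a$'s out-neighborhood yield identical counts and hence identical output.

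For \textbf{monotonicity}, suppose $P\le P'$. Raising beliefs from $0$ to $1$ can only increase $N^+(a,\cdot)$ and decrease $N^-(a,\cdot)$, so the signed difference $N^+-N^-$ is nondecreasing from $P$ to $P'$. I would then check that $m_a(P)>m_a(P')$ is impossible by ruling out the two offending transitions: $m_a(P)=1,m_a(P')=0$ is excluded since the difference cannot move from positive to negative, and the cases involving the tie value $P(a)$ require a short check using that $P\le P'$ forces $P(a)\le P'(a)$ together with the monotonicity of the difference. \textbf{Non-slavishness} under the hypothesis that $a$ is connected to at least two other agents is the one place needing the extra assumption: I must show no single neighbor $b$ has $m_a(P)=P(b)$ for all $P$. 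Here I would exhibit, for any candidate $b$, a profile where the majority of $a$'s (at least three, counting herself) neighbors disagree with $b$, forcing $m_a(P)\ne P(b)$.

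The main obstacle is the monotonicity step, since it is the only property whose verification does not reduce to a one-line observation: the interaction between the strict inequalities defining the $1$ and $0$ cases and the tie-breaking value $P(a)$ produces several subcases, and one must carefully track how $P(a)\le P'(a)$ combines with the change in $N^+-N^-$ to exclude a decrease in output. The other five properties I expect to dispatch quickly, with non-slavishness needing only the explicit construction of a witnessing profile permitted by the two-neighbor hypothesis.
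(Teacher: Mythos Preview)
Your proposal is correct and follows essentially the same approach as the paper: the paper, citing space constraints, in fact only writes out the monotonicity argument, and its proof---a case split on whether $m_a(P)=0$, or $m_a(P)=1$ with $N^+(a,P)=N^-(a,P)$, or $m_a(P)=1$ with $N^+(a,P)>N^-(a,P)$, combined with the observation that $N^+$ can only increase and $N^-$ only decrease when passing from $P$ to $P'$---is exactly the case analysis you outline. One small sharpening for congruency: in the tie case $m_a(P)$ depends on $P(a)$ specifically, not merely on the multiset of neighbor beliefs, so be sure to invoke that the isomorphism sends $a$ to $a'$ and hence $P(a)=P'(a')$.
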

\begin{proof}
For space reasons, we only show that the majority rule function is
monotonic here. Suppose that $P$ and $P'$ are two profiles such that
$P\le P'$. Then, for agent $a$ and every agent $b$ (including $a$)
connected by $a$, i.e., $(a,b) \in T$, we have $P(b) \le P'(b)$.
There are three cases:
\begin{itemize}
\item $m_a(P)=0$. In this case, $m_a(P) \le m_a(P')$.
\item $m_a(P)=1$ and $N^-(a,P) = N^+(a,P)$. In this case, $P(a)=1$ and $N^-(a,P') \le N^-(a,P) = N^+(a,P) \le N^+(a,P')$. Therefore, $P'(a)=1$
and $N^-(a,P') \le N^+(a,P')$. Hence, $m_a(P')=1$ so that $m_a(P)
\le m_a(P')$.
\item $m_a(P)=1$ and $N^-(a,P) < N^+(a,P)$. In this case, $N^-(a,P') \le N^-(a,P) < N^+(a,P) \le
N^+(a,P')$. Hence, $m_a(P')=1$ so that $m_a(P) \le m_a(P')$.
\end{itemize}
\end{proof}

We apply the majority rule function for belief evolution. We are
mainly interested in some fundamental issues related to equilibrium,
convergence and consensus. For instance, given a network, what are
the equilibria under the majority rule function? Does every
evolution sequence converge for any initial belief profile? Can the
consensus be reached eventually?

\comment{Some questions can be answered by simple observations. For
instance, the consensus belief profiles, in which all agents believe
or disbelieve the statement, are equilibria under $m_N$ for any
given network $N$ because majority rule function is bounded. Hence,
there always exists an equilibrium for any social network. But the
answers for other questions are not so obvious, for instance, the
convergence problem. We analyze the majority rule belief evolution
with those questions.}

First of all, let us consider some examples for synchronous belief
evolution.

\begin{example}\label{example-first}
Figure \ref{fig2} depicts the synchronous belief evolution processes
under the majority rule function for two different instances, where
the agents and their initial beliefs are the same, but $N_2$ has an
extra edge than $N_1$. Both evolution processes converge after
several steps.
\begin{figure}
\begin{center}
\includegraphics[width=8cm]{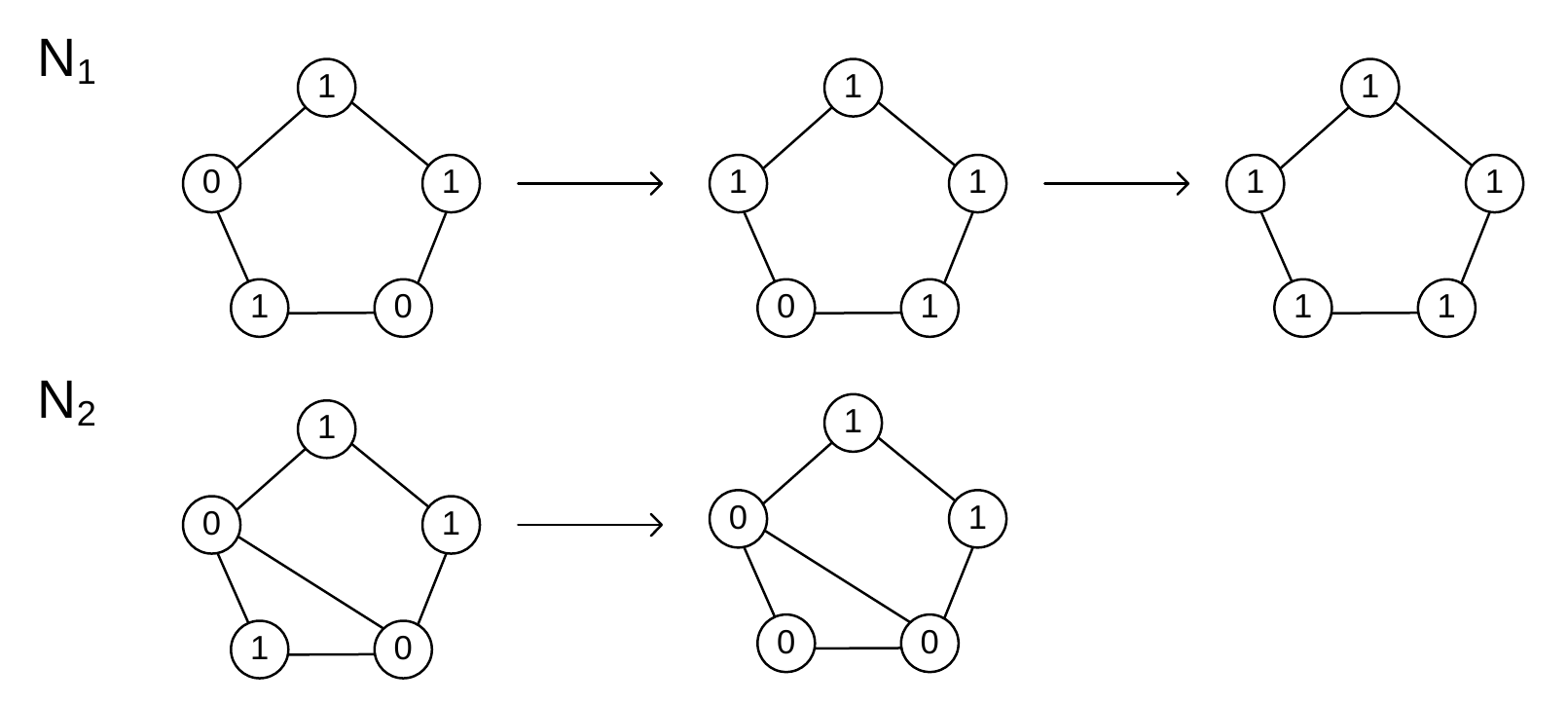}
\caption{An example of synchronous belief evolution} \label{fig2}
\end{center}
\end{figure}
\end{example}

Example \ref{example-first} illustrates that the network structure
is critical for belief evolution. Although $N_2$ has only one more
edge than $N_1$, the resulting equilibria are quite different. While
the convergence of evolution in $N_1$ is a consensus, the one in
$N_2$ is not. In fact, the former is a positive consensus, but the
majority opinion of the latter is, on the contrary, negative.
Observed from the evolution process in $N_2$, the majority opinion
of the convergence may not be the same as that of the initial
profile.

Also, Example \ref{example-first} shows that there might exist
non-consensus equilibria for some social networks. Actually, the
profile $P_4$ in Example \ref{profile-example} is another one. This
shows that, in some situations, consensus may never be reached even
if the belief evolution process converges.

In Example \ref{example-first}, both evolution processes terminate
within several steps. In fact, it can be shown that, the synchronous
belief evolution for any initial belief profile of $N_1$ and $N_2$
must converge. Unfortunately, this is not the case in general.

\begin{example}\label{example-non-converge}
Figure \ref{fig3} depicts the synchronous belief evolution process
for $P_2$ in Example \ref{profile-example} under $m_{N_0}$, which
falls into a loop $P_2 \rightarrow P_3 \rightarrow P_2 \dots
\rightarrow P_3 \rightarrow P_2 \dots$.
\begin{figure}
\begin{center}
\includegraphics[width=8cm]{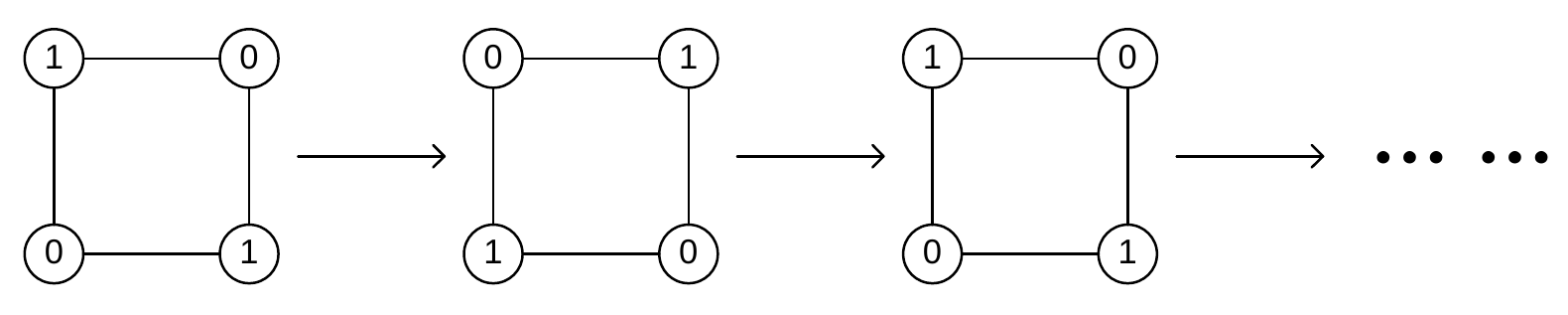}
\caption{Synchronous belief evolution for $P_2$} \label{fig3}
\end{center}
\end{figure}
\end{example}

Observed from Example \ref{example-non-converge}, in some
situations, synchronous belief evolution under the majority rule
function may never converge. If the network is finite, this must
fall into a loop because synchronous belief evolution is
deterministic. In these cases, there is no final result of belief
evolution.

Now we consider asynchronous belief evolution. Of course, not all
asynchronous evolution sequences converge since synchronous belief
evolution is a special case. Instead, we are concerned with whether
a particular sequence of groups of agents will lead to a
convergence. Again, consider the belief evolution for $P_2$ in $N_0$
under the asynchronous context.

\begin{example}\label{example-asynchronous}
Figure \ref{fig4} depicts three different asynchronous evolution
sequences for $P_2$ under $m_{N_0}$. The agents who evolved their
beliefs at the previous round are shadowed. In this example, all
evolution processes converge.
\begin{figure}
\begin{center}
\includegraphics[width=7cm]{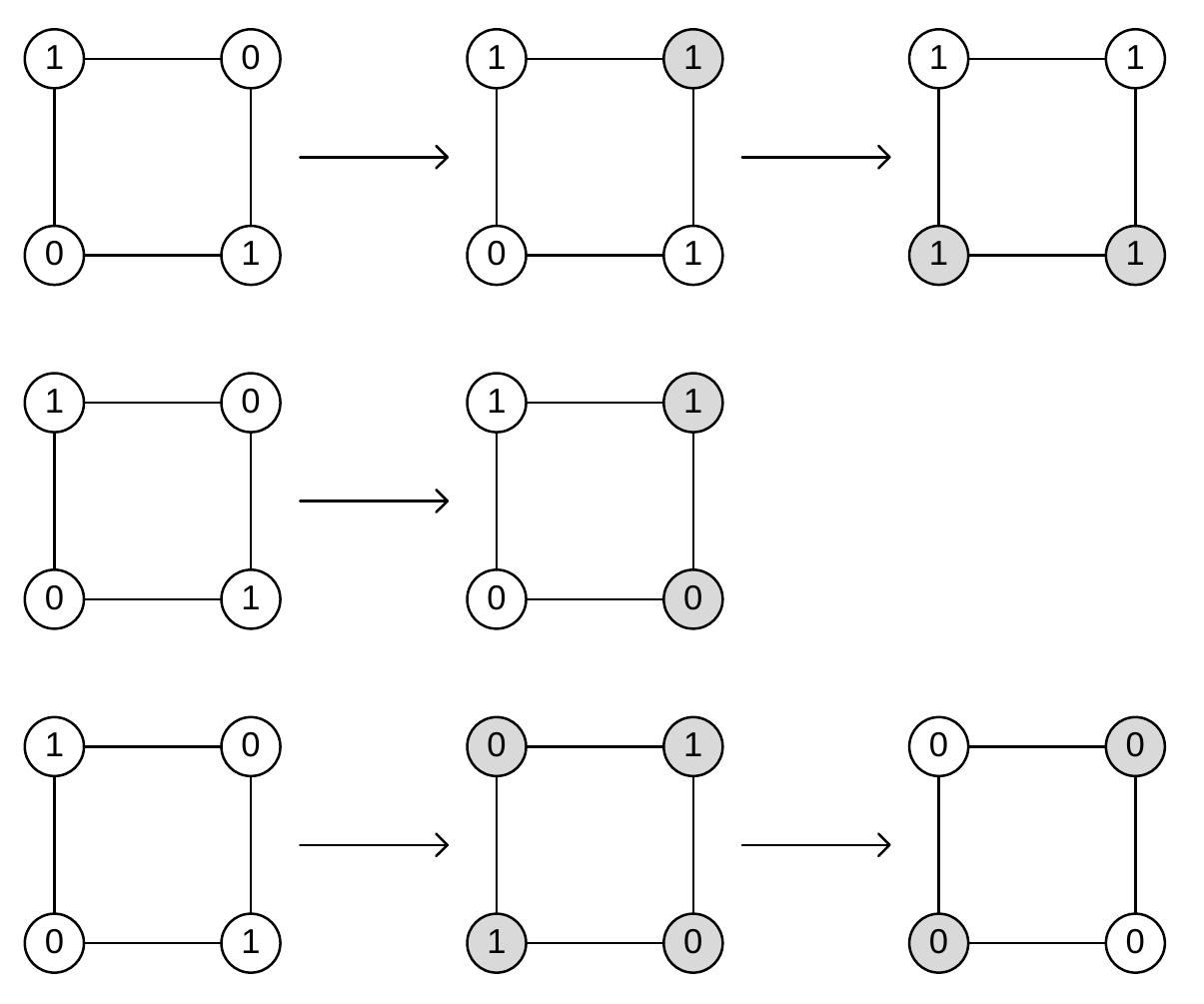}
\caption{Asynchronous belief evolution for $P_2$} \label{fig4}
\end{center}
\end{figure}
\end{example}

Compared to the synchronous belief evolution for $P_2$ in Example
\ref{example-non-converge}, there exists an (actually many)
asynchronous evolution sequence that converges. However, different
asynchronous evolution sequences may lead to exactly opposite
results, for instance, the first and the last sequences in Example
\ref{example-asynchronous}.

The following theorem shows that there always exists a converging
sequence for any initial belief profile under the majority rule
function.

\begin{theorem}\label{convergence-exist}
Let $N=\langle A,T \rangle$ be a finite social network. For any
belief profile $P$ of $N$, there exists a sequence $\sigma$ of
groups of agents such that the asynchronous belief evolution for $P$
under $m_N$ w.r.t. $\sigma$ converges.
\end{theorem}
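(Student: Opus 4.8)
The plan is to exhibit, for any initial profile $P$, an explicit scheduling sequence $\sigma$ that drives the system to an equilibrium. The natural strategy is to design a potential function that strictly decreases (or increases) along carefully chosen single-agent updates, so that the process cannot cycle and must halt in finitely many steps (since the number of profiles is finite). The most promising potential is the number of \emph{satisfied edges}, i.e. pairs $(a,b)\in T$ whose endpoints agree, or equivalently the number of agreeing ties; intuitively, when an agent updates under majority rule she switches to the side that more of her neighbors favor, which should not decrease the count of agreements she participates in.

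First I would fix the potential $\Phi(P) = |\{(a,b)\in T \mid P(a)=P(b)\}|$, the number of agreeing ties, and consider updating agents \emph{one at a time} rather than in groups; that is, take each $B_i$ to be a singleton $\{a_i\}$. The key local claim is: if a single agent $a$ strictly changes her belief under $m_a$ (so $m_a(P)\neq P(a)$, which by Definition~\ref{majority} forces a strict majority $N^+(a,P)\neq N^-(a,P)$ on the losing side), then updating only $a$ strictly increases $\Phi$. The reason is that flipping $a$ changes only the ties incident to $a$: before the flip $a$ agreed with the minority of her neighbors and disagreed with the strict majority, and after the flip these roles swap, so the net change in agreeing ties incident to $a$ is $N^{\mathrm{maj}}-N^{\mathrm{min}}>0$. (One must handle the self-loop $(a,a)\in T$ and the directedness of ties carefully, counting only the ties whose agreement status actually depends on $P(a)$.)

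Given this strict-increase lemma, the scheduling is greedy: repeatedly pick any single agent $a$ who is \emph{unstable}, meaning $m_a(P)\neq P(a)$, and update her; append $\{a\}$ to $\sigma$. Each such step strictly raises $\Phi$, and $\Phi$ is bounded above by $|T|$, so after finitely many steps no unstable agent remains. At that point $m_a(P)=P(a)$ for all $a$, which is exactly the equilibrium condition $P=m_N(P)$ of Definition~\ref{equilibrium}; by Proposition~\ref{equilibrium-same} this is a genuine equilibrium in both the synchronous and asynchronous senses. If at some stage no unstable agent exists before we have done anything, then $P$ is already an equilibrium and we take the empty (or any) continuation.

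The main obstacle is the correctness of the strict-increase lemma once directedness and the self-loop are taken into account. The majority rule at $a$ counts neighbors $b$ with $(a,b)\in T$ (outgoing ties including the self-tie $(a,a)$), yet $\Phi$ counts ties as ordered pairs, and flipping $a$ affects the agreement status of ties in \emph{both} directions — outgoing ties $(a,b)$ and incoming ties $(b,a)$. I would therefore either restrict attention to the symmetric (undirected) case used in the examples, where incoming and outgoing ties coincide and the argument is clean, or more carefully choose the potential to weight only the outgoing ties that $m_a$ actually consults, so that the quantity $m_a$ optimizes is precisely the quantity $\Phi$ rewards. Getting this bookkeeping exactly right — in particular ruling out any cancellation from incoming edges that could keep $\Phi$ flat and permit a cycle — is the delicate point; the rest is the finiteness-plus-monovariant argument, which is routine.
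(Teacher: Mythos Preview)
Your potential-function approach is the classical energy argument for majority (Hopfield) dynamics and is perfectly sound when the tie relation $T$ is symmetric: flipping a single unstable agent then strictly increases the number of agreeing ties, so greedy single-agent updates must terminate. The paper, however, works with a general \emph{directed} network, and here your argument has a real gap that your proposed fix does not close. When $a$ flips, the ties whose agreement status changes are all ordered pairs in $T$ touching $a$: the outgoing ties $(a,b)$, which $m_a$ consults and on which you do gain, \emph{and} the incoming ties $(c,a)$, which $m_a$ ignores. ``Weighting only the outgoing ties'' does not help, because the incoming tie $(c,a)$ is itself an outgoing tie of $c$ and is therefore still counted in any global sum of outgoing agreements. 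A single flip can thus strictly decrease $\Phi$ (give $a$ two out-neighbours believing $1$, forcing $m_a=1$, but a hundred in-neighbours believing $0$), so the monovariant breaks down and nothing in your sketch rules out cycling in the directed setting.

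The paper takes a genuinely different route that sidesteps directedness entirely and in fact works for \emph{any} monotonic evolution function $f_N$, not just the majority rule. The schedule has two phases: first repeatedly flip $0$'s to $1$'s whenever some agent has $P(a)=0$ but $f_a(P)=1$, reaching a profile $P'\ge P$; then repeatedly flip $1$'s to $0$'s whenever possible, reaching $P''\le P'$. Each phase terminates simply because the profile moves monotonically in one direction. That $P''$ is an equilibrium is a short case analysis using only monotonicity of $f_a$: any would-be unstable agent at $P''$ contradicts either the termination of the decreasing phase directly, or (via $f_a(P'')\le f_a(P^*)$ for the intermediate profile $P^*$ at which it was flipped down, or $f_a(P'')\le f_a(P')$ if it was never flipped) the termination of the earlier phase. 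What this buys over your approach is insensitivity to edge direction and a strictly more general conclusion; what your approach buys, in the undirected case, is a quantitative bound on the number of flips and the familiar energy picture.
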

\begin{proof}
We prove a stronger result that this theorem holds under any
monotonic belief evolution function $f_N$. We directly construct
such a sequence, which is divided into two phrases as follows.
\begin{description}
\item[Increasing phrase] At each round in this phrase, we flip over those negative
beliefs to positive ones (i.e. from $0$ to $1$) whenever possible.
Multiple rounds might be needed. Let $P'$ be the final belief
profile obtained in this phrase. Clearly, $P \le P'$.
\item[Decreasing phrase] On the contrary, at each round in this phrase,
we flip over those positive beliefs to negative ones (i.e. from $1$
to $0$) whenever possible. Similarly, multiple rounds might be
needed. Let $P''$ be the final belief profile obtained in this
phrase. Clearly, $P'' \le P'$.
\end{description}
We prove that $P''$ must be an equilibrium by contradiction. Assume
that there exists $a \in N$ such that $f_a(P'') \ne P''(a)$. Then,
$P''(a)= 0$ and $f_a(P'')=1$ according to the construction of the
decreasing phrase. Otherwise, $a$ will be further selected in the
decreasing phrase so that $P''$ is not the final profile obtained, a
contradiction. There are two cases:
\begin{itemize}
\item $P'(a)=1$. In this case, $a$ must be selected at some round in the
decreasing phrase to flip over from $1$ to $0$. Therefore, there
exists $P^*$ in the decreasing phrase such that $P^*(a)=1$ but
$f_a(P^*)=0$. Clearly, $P'' \le P^* \le P'$. Therefore, $1=f_a(P'')
\le f_a(P^*) =0$, a contradiction.
\item $P'(a)=0$. In this case, $f_a(P')=1$ since $f_a(P'')=1$ and
$P'' \le P'$, which contradicts to our construction that $P'$ is the
final profile obtained in the increasing phrase.
\end{itemize}
This completes our proof.
\end{proof}

However, the existence of converging asynchronous evolution does not
mean this convergence will eventually be reached. Firstly, the
agents themselves do not know which sequence will lead to a
convergence because they do not have global information. Secondly,
even if they know a converging sequence, perhaps they are not
cooperative enough to follow it.

Let us go back to the two major reasons for considering asynchronous
belief evolution. First, different agents may have different
frequency of communication with their friends. Second, agents might
be sometimes over-confident even if they know their friends'
objections. We can use a random variable to simulate both cases.

This motivates us to consider random asynchronous belief evolution,
in which the agents randomly evolve their beliefs at each round.
More precisely, each agent is associated with a random Boolean
variable to decide whether or not she will evolve her belief at a
certain time point. The value of the random variable will be
determined according to a probability distribution. For instance, at
the current stage, agent $a$ might have $0.8$ chance of evolving her
belief while agent $b$ might only have $0.4$. Hence, at a certain
time point, a subset of agents will be generated according to the
random variables, which is the set of agents who will evolve their
beliefs at the current stage. In this sense, random asynchronous
belief evolution can be regarded as a statistical process of
asynchronous evolution. We argue that random asynchronous belief
evolution is more realistic than synchronous belief evolution and
asynchronous belief evolution based on intentionally selected
sequences.

The following theorem shows that random asynchronous belief
evolution will eventually converge.

\begin{theorem}\label{random-evolution}
For any finite social network and initial belief profile, random
asynchronous belief evolution under the majority rule function
always converges.
\end{theorem}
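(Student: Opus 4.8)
The plan is to view the random asynchronous belief evolution as a Markov chain on the finite state space of belief profiles, and to show that with probability one this chain reaches an equilibrium. By Proposition~\ref{equilibrium-same}, every equilibrium $P$ satisfies $f_N^B(P)=P$ for all $B$, so equilibria are absorbing states: once the process enters an equilibrium it stays there forever. Thus it suffices to prove that, from any initial profile, the process is absorbed into the set of equilibria almost surely.

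The key is to connect the deterministic Theorem~\ref{convergence-exist} to the random setting. That theorem guarantees that from every profile $P$ there is a finite sequence of groups $B_1,\dots,B_n$ driving $P$ to an equilibrium under $m_N$ (majority rule is monotonic). Moreover, its proof yields a uniform length bound: the increasing phrase flips at least one belief from $0$ to $1$ at each round and never flips any belief back, so the number of believers strictly increases and the phrase terminates in at most $|A|$ rounds; symmetrically the decreasing phrase takes at most $|A|$ rounds. Hence every such converging sequence can be taken to have length at most $K:=2|A|$, independently of the starting profile. Since in random asynchronous evolution each agent updates with probability strictly between $0$ and $1$, every subset $B$ of agents is selected with positive probability at each round, so a prescribed sequence of at most $K$ groups occurs with probability at least $\delta^K$, where $\delta>0$ is a lower bound on the probability of any single group. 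Consequently, from every profile the process reaches (and, being absorbed, remains at) an equilibrium within $K$ rounds with probability at least $p:=\delta^K>0$.

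To conclude, I would partition the rounds into consecutive blocks of length $K$. Whatever the profile at the start of a block, the argument above shows that with probability at least $p$ the process reaches an equilibrium by the end of that block and is thereafter trapped. By the Markov property, the probability that the process is still not at an equilibrium after $m$ blocks is at most $(1-p)^m$, which tends to $0$ as $m\to\infty$. Therefore the process converges to an equilibrium with probability one.

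The main obstacle is obtaining the uniform bound $K$ on the length of the converging sequences and pinning down the positive-probability assumption on the update distribution: the whole argument rests on every group of agents being selectable with probability bounded below by a fixed $\delta>0$, so the theorem should be read under the natural hypothesis that each agent's update probability stays strictly between $0$ and $1$. Granting these, the remainder is the standard finite-Markov-chain fact that a positive per-block chance of hitting an absorbing set forces almost-sure absorption.
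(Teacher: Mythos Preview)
Your proof is correct and rests on the same two pillars as the paper's: Theorem~\ref{convergence-exist} (from every profile there is a finite deterministic path to an equilibrium) together with the observation, via Proposition~\ref{equilibrium-same}, that equilibria are absorbing. Where you diverge is in the packaging of the final probabilistic step. The paper builds the full belief-profile transition graph, passes to its condensation into strongly connected components, argues that the leaves of this condensation are precisely the equilibria (using Theorem~\ref{convergence-exist}), and then observes that a random walk which can take any edge with positive probability must eventually fall into a leaf. You instead mine the proof of Theorem~\ref{convergence-exist} for the explicit uniform bound $K=2|A|$ on the length of a converging sequence and run the standard block argument on the resulting Markov chain. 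Your route is more quantitative---it immediately gives geometric tails on the absorption time---and it makes explicit the hypothesis the paper leaves tacit, namely that each agent's update probability is bounded away from $0$ and $1$ so that every subset $B$ has probability at least some fixed $\delta>0$. The paper's condensation argument is more structural and terser, but correspondingly leaves both that hypothesis and the passage from ``every edge is possible'' to ``almost-sure absorption'' at an informal level.
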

\begin{proof}
To prove this, we need to introduce a notion called belief profile
transition graph. Let $N$ be a social network and $f_N$ the belief
evolution functions for agents in $N$. The {\em belief profile
transition graph} for $f_N$ is an edge labeled graph $\langle
\mathcal{P}, \mathcal{T} \rangle$, where $\mathcal{P}$ is the set of
all belief profiles of $N$, and $\mathcal{T}$ is the set of
transitions among profiles. Each edge is labeled with a subset of
agents in $N$. For two profiles $P$, $P'$ and a subset $B$ of
agents, $(P,P') \in T$ labeled by $B$ iff $P'=f_N^B(P)$, i.e., $P'$
is the belief profile obtained from $P$ by applying $f_N$ on agents
in $B$.

Now we consider the strongly connected components of the belief
profile transition graph for the majority rule function. First of
all, each equilibrium itself forms a strongly connected component.
Consider the condensation of the transition graph, which is a
directed acyclic graph. On the one hand, by Theorem
\ref{convergence-exist}, a node in the condensation is a leaf (i.e.
no outgoing edge) iff it is an equilibrium under the majority rule
function. On the other hand, if the network is finite, then for any
initial belief profile, random asynchronous belief evolution will
eventually lead to a leaf in the condensation because any subset of
agents might be generated by randomness for any profile. This shows
that random asynchronous belief evolution eventually converges.
\end{proof}

To end up this section, we summarize the main observations and
results for belief evolution under the majority rule function.
\begin{itemize}
\item The network structure is critical for belief evolution.
\item For some networks, there exists non-consensus
equilibrium. As a consequence, consensus may never be reached in
belief evolution.
\item For synchronous belief evolution, the evolution process may
never converge for some initial belief profiles.
\item The majority opinion of the convergence might not be
the same as the one of the initial profile.
\item For asynchronous belief evolution, there always exists an
evolution sequence leading to a convergence for any initial belief
profile. In some cases, there might exist many, and the resulting
final equilibria could be quite different.
\item Random asynchronous belief evolution processes always
converge.
\end{itemize}

\section{Related Work}

Belief and opinion dynamics in social network is widely studied in
related fields. Several rule of thumbs models are proposed in the
literature
\cite{DeGroot74,Friedkin90,Krause00,HegselmannK02,DeMarzo03,Ellison93,Golub10,AcemogluOP10}.
In DeGroot's \shortcite{DeGroot74} seminal work, each agent has an
initial opinion (a continuous value), and iteratively evolves her
belief by taking a weighted sum of her neighbors' opinions. Our
framework shares some similar ideas but differs from it on the
following aspects. First, we intend to propose a framework rather
than a model, in which the belief evolution function can be defined
arbitrarily under some restrictions. Yet we also consider a
representative model of this framework. Second, in our framework,
the value is discrete rather than continuous because we are
concerned with beliefs rather than opinions. Finally, in DeGroot's
model, the agents evolve their beliefs simultaneously, while we
consider both synchronous and asynchronous belief evolutions.

Another highly related work is from the area of statistical
mechanics, particulary interacting particle systems
\cite{Clifford73,Holley75}. In the voter model, each agent has an
initial belief (either true or false) and randomly evolves her
belief according to their neighbors' beliefs under a transition
function, which should satisfy some properties as well. However, the
transition function in the voter model calculates a probability
rate, based on which the agent will flip over her beliefs. In
contrast, the belief evolution function in our framework directly
calculates the value of the belief.

The majority rule evolution function is named from the same well
known approach in voting system \cite{Gaertner09}. Generally
speaking, belief evolution can be considered as voting in a social
network for two opposite candidates. However, in belief evolution,
the majority rule voting is performed locally, individually,
distributedly and iteratively, while in voting system, it is
performed globally, wholly, contralizedly and only once.

There are other related works, actually from several different
disciplines. For instance, an alternative model of opinion formation
is to take just a single (but not all) friend's opinion into account
based on their contact frequency \cite{AcemogluOP10}. Another
interesting approach, called replicator dynamics
\cite{sandholm2010}, takes the historical performance of beliefs
into account, and the belief that performs better in the past will
be more likely replicated. However, for space reasons, we are not
able to discuss all of them in details.

\section{Conclusion}

In this paper, we considered the problem of belief evolution in
social networks. To sum up, the main contributions are as follows.
\begin{itemize}
\item We introduced a general framework for belief evolution in
social networks. In this framework, the agents form an initial
belief profile on a statement based on evidences and observations,
and then start to communicate each other in the social network to
update their beliefs according to their belief evolution functions.
The belief evolution process is performed iteratively, either
synchronously or asynchronously.
\item We argued that a rational belief evolution function should
satisfy some desirable properties such as boundedness and
monotonicity.
\item We focused on the majority rule belief evolution function, which satisfies all properties mentioned
above. The main discoveries for majority rule belief evolution are
summarized in the end of Section 3.
\item In particular, we focused on the convergence problem. For synchronous belief evolution,
the process may never terminate. For asynchronous belief evolution,
we show that there always exists a converging evolution sequence for
any initial profile. More interestingly, random asynchronous belief
evolution, arguably corresponding to belief evolution in the
reality, always converges.
\end{itemize}

\bibliography{ref}

\begin{thebibliography}{}

\bibitem[\protect\citeauthoryear{Acemoglu \bgroup \em et al.\egroup
  }{2010}]{AcemogluOP10}
Daron Acemoglu, Asuman~E. Ozdaglar, and Ali ParandehGheibi.
\newblock Spread of (mis)information in social networks.
\newblock {\em Games and Economic Behavior}, 70(2):194--227, 2010.

\bibitem[\protect\citeauthoryear{Alchourr{\'o}n \bgroup \em et al.\egroup
  }{1985}]{Alchourron-etal85}
Carlos~E. Alchourr{\'o}n, Peter G{\"a}rdenfors, and David Makinson.
\newblock On the logic of theory change: Partial meet contraction and revision
  functions.
\newblock {\em J. Symb. Log.}, 50(2):510--530, 1985.

\bibitem[\protect\citeauthoryear{Bindel \bgroup \em et al.\egroup
  }{2011}]{BindelKO11}
David Bindel, Jon~M. Kleinberg, and Sigal Oren.
\newblock How bad is forming your own opinion?
\newblock In {\em FOCS}, pages 57--66, 2011.

\bibitem[\protect\citeauthoryear{Clifford and Sudbury}{1973}]{Clifford73}
Peter Clifford and Aidan Sudbury.
\newblock A model for spatial conflict.
\newblock {\em Biometrika}, 60(3):581--588, 1973.

\bibitem[\protect\citeauthoryear{DeGroot}{1974}]{DeGroot74}
Morris~H. DeGroot.
\newblock Reaching a consensus.
\newblock {\em Journal of the American Statistical Association},
  69(345):118--121, 1974.

\bibitem[\protect\citeauthoryear{DeMarzo \bgroup \em et al.\egroup
  }{2003}]{DeMarzo03}
Peter~M. DeMarzo, Dimitri Vayanos, and Jeffrey Zwiebel.
\newblock Persuasion bias, social influence, and unidimensional opinions.
\newblock {\em Quarterly Journal of Economics}, 118(3):909--968, 2003.

\bibitem[\protect\citeauthoryear{Ellison and Fudenberg}{1993}]{Ellison93}
Glenn Ellison and Drew Fudenberg.
\newblock Rules of thumb for social learning.
\newblock {\em Journal of Political Economy}, 101(4):612, 1993.

\bibitem[\protect\citeauthoryear{Friedkin and Johnsen}{2010}]{Friedkin90}
N.~E. Friedkin and E.~C. Johnsen.
\newblock {Social influence and opinions}.
\newblock {\em The Journal of Mathematical Sociology}, 15:3--4, 2010.

\bibitem[\protect\citeauthoryear{Gaertner}{2009}]{Gaertner09}
Wulf~Christian Gaertner.
\newblock {\em A Primer in Social Choice Theory}.
\newblock Oxford University Press, 2009.

\bibitem[\protect\citeauthoryear{Golub and Jackson}{2010}]{Golub10}
B.~Golub and M.~O. Jackson.
\newblock Naive learning in social networks: Convergence, influence, and the
  wisdom of crowds.
\newblock {\em American Economic Journal: Microeconomics}, 2, 2010.

\bibitem[\protect\citeauthoryear{Hegselmann and Krause}{2002}]{HegselmannK02}
Rainer Hegselmann and Ulrich Krause.
\newblock Opinion dynamics and bounded confidence: models, analysis and
  simulation.
\newblock {\em J. Artificial Societies and Social Simulation}, 5(3), 2002.

\bibitem[\protect\citeauthoryear{Holley and Liggett}{1975}]{Holley75}
Richard~A. Holley and Thomas~M. Liggett.
\newblock Ergodic theorems for weakly interacting infinite systems and the
  voter model.
\newblock {\em The Annals of Probability}, 3(4):643--663, 1975.

\bibitem[\protect\citeauthoryear{Katsuno and
  Mendelzon}{1989}]{KatsunoMendelzon89}
Hirofumi Katsuno and Alberto~O. Mendelzon.
\newblock A unified view of propositional knowledge base updates.
\newblock In {\em IJCAI}, pages 1413--1419, 1989.

\bibitem[\protect\citeauthoryear{Krause}{2000}]{Krause00}
Ulrich Krause.
\newblock A discrete nonlinear and non-autonomous model of consensus formation.
\newblock In S.~Elyadi, G.~Ladas, J.~Popenda, and J.~Rakowski, editors, {\em
  Communications in Difference Equations}, pages 227--236. Gordon and Breach
  Pub., Amsterdam, 2000.

\bibitem[\protect\citeauthoryear{Liggett}{1985}]{Liggett85}
Thomas~M. Liggett.
\newblock {\em Interacting Particle Systems}.
\newblock Springer Berlin Heidelberg, 1985.

\bibitem[\protect\citeauthoryear{Sandholm}{2010}]{sandholm2010}
W.H. Sandholm.
\newblock {\em Population Games and Evolutionary Dynamics}.
\newblock Economic Learning and Social Evolution. Mit Press, 2010.

\end{thebibliography}
\bibliographystyle{named}

\end{document}